	\DeclareFontShape{OMX}{cmex}{bx}{n}{%
	   <->sfixed*cmexb10%
	   }{}
	\DeclareAcronym{AMCD}{short=AMCD, long={Aide Multicritère à la Décision}}
\DeclareAcronym{AHP}{short=AHP, long={Analytic Hierarchy Process}}
\DeclareAcronym{AR}{short=AR, long={Argumentative Recommender}}
\DeclareAcronym{DA}{short=DA, long={Decision Analysis}}
\DeclareAcronym{DJ}{short=DJ, long={Deliberated Judgment}}
\DeclareAcronym{DM}{short=DM, long={Decision Maker}}
\DeclareAcronym{DP}{short=DP, long={Deliberated Preference}}
\DeclareAcronym{MAVT}{short=MAVT, long={Multiple Attribute Value Theory}}
\DeclareAcronym{MCDA}{short=MCDA, long={Multicriteria Decision Aid}}
\DeclareAcronym{MIP}{short=MIP, long={Mixed Integer Program}}
\DeclareAcronym{SCR}{short=SCR, long={Social Choice Rule}}
\DeclareAcronym{SEU}{short=SEU, long={Subjective Expected Utility}}
		\let\thanks\@gobble
	\NewDocumentCommand{\addhrauthor}{m}{
		\seq_gput_right:Nn \g_oc_hrauthor_seq { #1 }
	}
	\NewExpandableDocumentCommand{\hrauthor}{}{
		\seq_use:Nn \g_oc_hrauthor_seq {,~}
	}
		\gdef\fixauthor{\xpretocmd{\author}{\addhrauthor{#2}}{}{}}%
\def\csname ver@etex.sty\endcsname{3000/12/31}
\definecolor{ao(english)}{rgb}{0.0, 0.5, 0.0}
\NewDocumentCommand{\possessivecite}{O{}m}{\citeauthor{#2}’s \citeyearpar[#1]{#2}}
\NewDocumentCommand{\Possessivecite}{O{}m}{\Citeauthor{#2}’s \citeyearpar[#1]{#2}}
	\declaretheorem{theorem}
	\declaretheorem{lemma}
	\declaretheorem{proposition}
	\declaretheorem{corollary}
	\declaretheorem[style=remark, qed=$\triangle$]{example}
\NewDocumentCommand{\suchthat}{}{\;\ifnum\currentgrouptype=16 \middle\fi|\;}
\NewDocumentCommand{\knowing}{}{\;\ifnum\currentgrouptype=16 \middle\fi|\;}
\NewDocumentCommand{\intvl}{m}{⟦#1⟧}
\DeclarePairedDelimiter\set{\{}{\}}
\DeclareMathOperator*{\argmax}{arg\,max}
\let\phi\varphi
\NewDocumentCommand{\classifs}{}{\mathcal{C}}
\NewDocumentCommand{\cn}{}{\bm{c}}
\NewDocumentCommand{\prefi}{O{i}}{\succ_{#1}}
\NewDocumentCommand{\mleadsto}{O{\eta}}{⇝_{#1}}
\NewDocumentCommand{\mbeats}{O{\eta}}{⊳_{#1}}
\title{Classification aggregation without unanimity\thanks{Corresponding author:\\ Matthieu Hervouin\\
E-mail: \href{mailto:matthieu.hervouin@dauphine.fr}{\texttt{matthieu.hervouin@dauphine.eu}}\\
Postal Adress: Université Paris Dauphine, LAMSADE, Matthieu Hervouin, Place du Maréchal de Lattre de Tassigny, 75016, Paris, France}}
\author[1]{Olivier Cailloux}
\author[1]{Matthieu Hervouin}
\author[2,1]{Ali I. Ozkes}
\author[1]{M. Remzi Sanver}
\affil[1]{Université Paris-Dauphine, Université PSL, CNRS, LAMSADE, 75016 Paris, France}
\affil[2]{SKEMA Business School, Université Côte d’Azur, GREDEG, Paris, France.}
\begin{document}
\maketitle

\begin{abstract}
  A classification is a surjective mapping from a set of objects to a set of categories. A classification aggregation function aggregates every vector of classifications into a single one. We show that every citizen sovereign and independent classification aggregation function is essentially a dictatorship. This impossibility implies an earlier result of Maniquet and Mongin (2016), who show that every unanimous and independent classification aggregation function is a dictatorship. The relationship between the two impossibilities is reminiscent to the relationship between Wilson’s and Arrow’s impossibilities in preference aggregation. Moreover, while the Maniquet-Mongin impossibility rests on the existence of at least three categories, we propose an alternative proof technique that covers the case of two categories, except when the number of objects is also two. We also identify all independent and unanimous classification aggregation functions for the case of two categories and two objects.
\end{abstract}

\section{Introduction}
A relatively recent aggregation impossibility in social choice theory has been established by \citet{maniquet2016theorem}
within the context of aggregating classifications. Their model is constructed out of a finite set of at least three categories, a finite set of objects, and a finite set of individuals who classify objects into categories. A classification is a surjective mapping from the set of objects to the set of categories, \emph{i.e.}, to every category at least one object is classified. The number of considered objects is no less than the number of available categories, which allows to assume surjectivity. A classification aggregation function maps a given vector (or profile) of classifications (one for each individual) into a single classification.

The model incorporates two other conditions. One is unanimity: if all individuals make the same classification then the aggregate classification complies. The other condition, independence, requires an object to be classified identically by the classification aggregator at any two classification profiles where every individual classifies this object identically. The model and its conditions are Arrovian in spirit, which ends up in an Arrovian impossibility, expressed by Theorem 1 in \citet{maniquet2016theorem}: every independent and unanimous classification aggregation function is a dictatorship. We refer to this result as the MM (\citeauthor{maniquet2016theorem}) impossibility.

We allow the number of categories to be at least two but require the number of objects exceed the number of categories. We replace unanimity with a weaker condition, namely citizen sovereignty, which requires that any object can be classified in any category. We prove that every independent and citizen sovereign classification aggregation function is essentially a dictatorship. Citizen sovereignty is implied by unanimity. An essential dictatorship is more general than a dictatorship but the two concepts are equivalent under unanimity. Thus, our result implies the MM impossibility within our environment. The case of two categories and two objects is the only one that escapes both the MM impossibility and our impossibility. We treat this case separately and identify all classification aggregation functions that are unanimous and independent.

Section \ref{Sec:prel} gives the basic notions and notation. Section \ref{proofs} presents the results. Section \ref{concl} makes some concluding remarks.

\section{Basic notions and notation}
\label{Sec:prel}
We consider a set $N = \set{1, …, n}$ of \emph{individuals} with $n \geq 2$, a set $P = \set{p_1, …, p_\rho}$ of \emph{categories} with $\rho \geq 2$, and a set $X = \set{x_1, …, x_m}$ of \emph{objects} with $m \geq \rho$.

We define a \emph{classification} as a surjective mapping $c: X \rightarrow P$ and denote by $\mathcal{C} \subset P^X$ the set of classifications.
We write $ \bm{c}=(c_1,\ldots ,c_n) \in \mathcal{C}^N$ for a classification profile. Given $ \bm{c} \in \mathcal{C}^N$ and $x\in X$, we write $\bm{c}_x \in P^N$ for the vector of categories that object $x$ is put into by each individual, thus $\forall i \in N, \bm{c}_x(i)=c_i(x)$.
A \emph{classification aggregation function} (CAF) is a mapping $\alpha: \mathcal{C}^N \rightarrow \mathcal{C}$.

A CAF $\alpha$ is \emph{independent} if $ \forall x \in X, \forall \bm{c}, \bm{c}' \in \classifs^N, \bm{c}_x = \bm{c}'_x ⇒ α(\bm{c})(x) = α(\bm{c}')(x)$.

An \emph{elementary} CAF is a mapping from $P^N$ to $P$. Note that an independent CAF $\alpha$ can be expressed as a collection $(\alpha_x)_{x\in X}$ of elementary CAFs with $\alpha_x(\cn_x) = \alpha(\cn)(x)$.

A CAF $\alpha$ is a \emph{dictatorship} if there exists $d \in N$ such that for all $\bm{c} \in \mathcal{C}^N, \alpha(\bm{c})=c_d$. Let $\pi : P ↔ P$ denote a permutation of $P$. A CAF is \emph{essentially a dictatorship} if there exists $ d \in N,$ and a permutation $\pi$ of $P$ such that for all $\bm{c} \in \mathcal{C}^N, \alpha(\bm{c})=\pi \circ c_d$. Every dictatorship is essentially a dictatorship.

A CAF $\alpha$ is \emph{unanimous} if $\forall c \in \mathcal{C}: \alpha(c, \ldots ,c)=c$. A CAF $\alpha$ is \emph{citizen sovereign} if $\forall x \in X, p \in P, \exists \bm{c} \in \mathcal{C}^N$ such that $ \alpha(\bm{c})(x)=p$. Unanimity implies citizen sovereignty.

We exemplify a CAF and illustrate some of these concepts.

\begin{example}[$\alpha_\mathit{PLUR}$]\label{example}
  Let $N=\set{1,2,3}, X=\set{x,y,z}$ and $P=\set{p,q}$. We define $\alpha_{PLUR}$ as the CAF that selects the classification that is the plurality winner, breaking ties according to an exogenous linear order $T$ on $\mathcal{C}$. Thus, for $\bm{c}\in \mathcal{C}^N, \alpha_\mathit{PLUR}(\bm{c})= \max\limits_T( \argmax\limits_{c \in \mathcal{C}} |\set{i \in N \suchthat c_i=c}|)$.
  One can check that $\alpha_{PLUR}$ is a CAF and is unanimous while not being essentially a dictatorship.

  Let the classification that maps $x $ to $p$ and $y,z$ to $q$ be the maximal element of $T$.
  \Cref{plur} shows the behavior of $\alpha_\mathit{PLUR}$ for two classification profiles. We can see that $\alpha_\mathit{PLUR}$ does not satisfy independence as the profiles are the same regarding object $x$, yet $\alpha_\mathit{PLUR}(\bm{c})(x)\neq \alpha_\mathit{PLUR}(\bm{c}')(x)$.

  \begin{table}\caption{The behaviour of $\alpha_\mathit{PLUR}$ on two classification profiles.}
    \label{plur}

    $\begin{array}{ccccccccc}\toprule
                      & \multicolumn{3}{c}{\bm{c}} &   & \multicolumn{3}{c}{\bm{c}'} &                                                                          \\
        \mbox{object} & 1                          & 2 & 3                           & \alpha_\mathit{PLUR}(\bm{c}) & 1 & 2 & 3 & \alpha_\mathit{PLUR}(\bm{c}') \\ \midrule
        x             & p                          & q & q                           & p                            & p & q & q & q                             \\
        y             & q                          & p & q                           & q                            & q & q & q & q                             \\
        z             & q                          & q & p                           & q                            & q & p & p & p                             \\ \bottomrule
      \end{array}$

  \end{table}

\end{example}

\section{Results}\label{proofs}
\begin{theorem}\label{thm1}
  For $m>\rho \geq 2$, every citizen sovereign and independent CAF is essentially a dictatorship.
\end{theorem}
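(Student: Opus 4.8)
The plan is to exploit two features of the model: that independence turns $\alpha$ into a family of elementary CAFs $(\alpha_x)_{x\in X}$ with $\alpha_x\colon P^N\to P$, and that the only genuine coupling between these elementary CAFs is the requirement that the aggregate be \emph{surjective} on every profile of surjective classifications. First I would record the two richness facts that make the domain large enough to argue coordinate-wise. For $m\ge\rho$, any $v\in P^N$ is of the form $\bm c_x$ for some $\bm c\in\mathcal C^N$ (fill the remaining $m-1\ge\rho-1$ objects of each individual so as to restore surjectivity), so every $\alpha_x$ is total and, by citizen sovereignty, surjective onto $P$. For $m>\rho$ one gets the stronger statement that for distinct $x,y$ and \emph{arbitrary} $v,w\in P^N$ there is a profile with $\bm c_x=v$ and $\bm c_y=w$; the extra object is exactly what is needed when $v(i)=w(i)$, and this is the first and decisive place where $m>\rho$ (rather than $m\ge\rho$) is used. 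The excluded case $(m,\rho)=(2,2)$ is precisely the one where this two-object richness fails.

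The second step is a reduction to unanimity, in the spirit of deriving Arrow from Wilson. Define the constant response $\pi_x(p):=\alpha_x(p,\dots,p)$. On a unanimous profile $(c,\dots,c)$ the aggregate is $\bigl(\pi_x(c(x))\bigr)_{x}$, which must be surjective for every surjective $c$; I would show that this forces all the $\pi_x$ to coincide with one permutation $\pi$ of $P$ (a coordinate-wise map preserves surjectivity of every surjective tuple, for $m>\rho$, only if it is a single common bijection — a property that genuinely fails at $(m,\rho)=(2,2)$). Replacing $\alpha$ by $\beta:=\pi^{-1}\circ\alpha$ yields an independent CAF that is now \emph{unanimous}, since $\beta_x(p,\dots,p)=p$. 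As essential dictatorship of $\alpha$ is equivalent to dictatorship of $\beta$, it remains to prove that every unanimous independent CAF with $m>\rho$ is a dictatorship.

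For the Arrovian core I would use a confinement construction. Fix a category $q$ and two objects $x,y$, and set the remaining $m-2\ge\rho-1$ objects unanimously so that they cover $P\setminus\{q\}$. Unanimity pins their outputs to $P\setminus\{q\}$, so surjectivity of the aggregate forces $\beta_x(v)=q$ or $\beta_y(w)=q$, while individual surjectivity forces the hypothesis $\{i:v(i)=q\}\cup\{i:w(i)=q\}=N$. Running the same construction with $q$ replaced by the other categories yields a dual constraint. Writing $W_x=\{C\subseteq N:\beta_x(v)=q$ whenever $\{i:v(i)=q\}=C\}$, the two constraints read $C\cup D=N\Rightarrow C\in W_x$ or $D\in W_y$, and $C\in W_x,\ D\in W_y\Rightarrow C\cap D\ne\emptyset$. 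Taking $D=N\setminus C$ shows that $W_y$ is the dual of $W_x$; comparing three distinct objects then forces all the $W_x$ to coincide with one self-dual family $W$. Monotonicity of $W$ follows from the intersection constraint, and a three-object version of the surjectivity argument (all three outputs starved of $q$) gives closure under intersection. Thus $W$ is an ultrafilter on the finite set $N$, hence principal, $W=\{C:d\in C\}$ for some $d\in N$; then $\beta_x(v)=q$ iff $v(d)=q$, so each $\beta_x$ is the projection onto $d$ and $\alpha=\pi\circ(\,\cdot\,)_d$ is essentially a dictatorship.

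The main obstacle is the passage to $\rho\ge 3$, where the coupling is subtler than in the two-category case: whether $\beta_x(v)=q$ may a priori depend not only on the coalition $\{i:v(i)=q\}$ but also on how the remaining individuals distribute their votes among the other categories, so the family $W_x$ above is not obviously well defined. I expect to resolve this with a neutrality lemma proved by the same confinement idea, dedicating $\rho-2$ further objects (unanimously fixed to the categories of $P\setminus\{p,q\}$) to neutralise all categories except a chosen pair $\{p,q\}$; this both makes decisiveness depend only on the coalition and reduces every question to a two-category one, after which the ultrafilter argument applies verbatim. The bookkeeping here is tight — three active objects plus $\rho-2$ neutralising objects require exactly $m\ge\rho+1$ — which is again where $m>\rho$ is essential and why $(m,\rho)=(2,2)$ must be treated separately.
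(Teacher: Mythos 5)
Your overall architecture (first reduce to a generalized-unanimity statement via a permutation $\pi$, then run an Arrovian decisiveness argument on the unanimous case) matches the paper's decomposition into its Lemmata, and your second half differs only in technique: the paper proves the unanimous case by a pivotal-voter argument rather than by ultrafilters. But there is a genuine gap at the reduction step. The claim that ``a coordinate-wise map preserves surjectivity of every surjective tuple, for $m>\rho$, only if it is a single common bijection'' is false. Take $m=3$, $\rho=2$, $P=\{p,q\}$, and let $\pi_{x_1}\equiv p$ and $\pi_{x_2}\equiv q$ be constant with $\pi_{x_3}$ arbitrary: every tuple, surjective or not, is mapped to a tuple whose image contains both $p$ and $q$, so surjectivity is preserved, yet no $\pi_{x}$ is a bijection. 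The same construction ($\pi_{x_i}\equiv p_i$ for $i\le\rho$, the remaining $\pi_{x_l}$ arbitrary) defeats the claim for every $m>\rho$. Citizen sovereignty as you invoke it (surjectivity of each $\alpha_x$ as a map on all of $P^N$) does not exclude this, since it says nothing about the restriction of $\alpha_x$ to the diagonal; so $\pi$ need not be invertible, $\beta=\pi^{-1}\circ\alpha$ is not well defined, and the reduction to unanimity collapses.

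To close the gap you must use citizen sovereignty \emph{off} the diagonal: for each object $y$ and category $r$, independence plus citizen sovereignty give a (generally non-unanimous) vector $\bm{k}_{y,r}\in P^N$ with $\alpha_y(\bm{k}_{y,r})=r$. Building profiles in which objects $x_{\rho+1},\dots,x_m$ carry such vectors pins their aggregate outputs to an arbitrary $r$, and surjectivity of the aggregate then starves the $\rho$ remaining unanimous columns of $r$, forcing $\{\alpha_{x_i}(p_i,\dots,p_i) : i\le\rho\}\supseteq P\setminus\{r\}$; varying $r$ yields bijectivity of $\pi$, and similar profiles show all objects share the same $\pi$. This is precisely what the paper's first lemma does and is the idea your sketch is missing: the behaviour of $\alpha$ on unanimous profiles alone cannot force it. (Your third step is also only a plan --- the deferred neutrality lemma is where most of the work lies for $\rho\ge 3$ --- but the confinement constructions there do fit within $m\ge\rho+1$ and look repairable.)
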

We prove Theorem \ref{thm1} by conjoining Lemmata \ref{lem: CS} and \ref{lem: GU} stated and proven below. We also define a generalization of unanimity that we need in our proofs: a CAF $\alpha$ satisfies \emph{generalized unanimity} (GU) if $\exists \pi : P \leftrightarrow P \suchthat \forall c \in \mathcal{C}, \alpha(c,\ldots ,c)=\pi(c)$.

\begin{lemma}\label{lem: CS}
  For $m>\rho \geq 2$, every citizen sovereign and independent CAF satisfies GU.
\end{lemma}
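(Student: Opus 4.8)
The plan is to turn the statement into a fact about the restriction of $\alpha$ to unanimous profiles and then to force rigidity through citizen sovereignty on genuinely non-unanimous profiles. By independence, write $\alpha=(\alpha_x)_{x\in X}$ with $\alpha_x:P^N\to P$, and set $\pi_x(p):=\alpha_x(p,\ldots,p)$ for each object $x$ and category $p$. On a unanimous profile, $\alpha(c,\ldots,c)(x)=\alpha_x(c(x),\ldots,c(x))=\pi_x(c(x))$, so GU holds exactly when there is a single permutation $\pi$ of $P$ with $\pi_x=\pi$ for every $x$. Since $\alpha(c,\ldots,c)$ is itself a classification, I record the constraint $(\star)$: for every classification $c$, the family $(\pi_x(c(x)))_{x\in X}$ is surjective onto $P$. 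It therefore suffices to prove (A) that each $\pi_x$ is surjective, hence (as $P$ is finite) a bijection, and (B) that all the $\pi_x$ coincide; their common value is then the permutation witnessing GU.

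The first point to stress is that $(\star)$ alone is not enough, so citizen sovereignty must really be used. For $\rho=2$ the choice $\pi_x\equiv p_1$ and $\pi_{x'}\equiv p_2$ on two objects makes $(\pi_y(c(y)))_y$ surjective for every $c$, yet violates GU; such degeneracies are excluded only because the elementary CAFs are surjective and $\alpha$ outputs a classification on \emph{every} profile. Accordingly I would prove (A) as follows. Suppose $\pi_{x_0}$ misses a category $r_0$. For every object $x\neq x_0$, citizen sovereignty makes $\alpha_x$ surjective, so (as $\rho\geq 2$) there is a vector $\bm v^x\in P^N$ with $\alpha_x(\bm v^x)\neq r_0$; for $x_0$ every input already yields a value $\neq r_0$, so its vector is free. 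If one can assemble a single profile $\bm c\in\mathcal C^N$ with $\bm c_x=\bm v^x$ simultaneously for all $x$, then $\alpha(\bm c)$ avoids $r_0$ on every object and is not surjective, contradicting $\alpha(\bm c)\in\mathcal C$.

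The main obstacle is precisely this assembly. Prescribing $\bm c_x=\bm v^x$ for all $x$ fixes each individual's classification coordinate by coordinate, and nothing yet guarantees that the resulting $c_i$ are surjective, as a valid profile demands. This is where $m>\rho$ does the work: the surplus objects, the complete freedom on $x_0$, and the freedom to choose each $\bm v^x$ from a nonempty set together provide enough slack to repair non-surjective individuals while keeping every object's output away from $r_0$. I expect this bookkeeping to need a short case split — the repair is immediate for $\rho=2$ but requires more care for $\rho\geq 3$ — and it is exactly this slack that vanishes when $m=\rho=2$, the excluded case. Making this assembly rigorous is the technical heart of the lemma.

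Given (A), step (B) follows from $(\star)$ alone. As each $\pi_x$ is a bijection, the product map $\Phi:P^X\to P^X$, $\Phi(c)(x)=\pi_x(c(x))$, is a bijection of the finite set $P^X$; by $(\star)$ it sends surjective functions to surjective functions, hence, being a bijection, it also sends non-surjective functions to non-surjective ones, and so does $\Phi^{-1}$, given by $\Phi^{-1}(f)(x)=\pi_x^{-1}(f(x))$. Now suppose $\pi_{x_1}\neq\pi_{x_2}$; then $\pi_{x_1}^{-1}(a)\neq\pi_{x_2}^{-1}(a)$ for some category $a$. For $\rho=2$ the constant function $f\equiv a$ is non-surjective while $\Phi^{-1}(f)$, taking distinct values at $x_1$ and $x_2$, is surjective, contradicting the preceding sentence; for $\rho\geq 3$ the same idea works after adjusting $f$ on the remaining objects, once more using $m>\rho$ to keep $f$ short of surjectivity while $\Phi^{-1}(f)$ becomes surjective. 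Hence all $\pi_x$ coincide, and together with (A) their common value is the permutation required by GU.
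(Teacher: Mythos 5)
Your high-level target is the same as the paper's (per-object maps $\pi_x(p)=\alpha_x(p,\dots,p)$, show each is a bijection, show they all coincide), and your step (B) — using that $\Phi(c)(x)=\pi_x(c(x))$ is a bijection of $P^X$ mapping surjective functions to surjective functions, hence also non-surjective to non-surjective — is a genuinely different and rather elegant replacement for the paper's last two profile constructions. But step (A) contains the real gap, and you have named it yourself without closing it: the ``assembly'' of a single valid profile $\bm{c}\in\mathcal{C}^N$ with $\bm{c}_x=\bm{v}^x$ for all $x$ is not bookkeeping, it is the entire difficulty of the lemma. Citizen sovereignty only hands you, for each object $x$, \emph{some} vector $\bm{v}^x\in P^N$ with $\alpha_x(\bm{v}^x)\neq r_0$, with no control over its entries; stacking these can give an individual $i$ a non-surjective map $x\mapsto\bm{v}^x(i)$, and your proposed ``repair'' is circular: to restore $i$'s surjectivity you must overwrite entries of some $\bm{v}^x$, after which you no longer know that $\alpha_x$ of the modified vector avoids $r_0$, since independence gives you no handle on $\alpha_x$ away from the specific vectors citizen sovereignty provided. (There is also a smaller slip: ``$\pi_{x_0}$ misses $r_0$'' constrains $\alpha_{x_0}$ only on unanimous inputs, so ``every input already yields a value $\neq r_0$'' at $x_0$ is unjustified — though a unanimous vector would do there.)

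The paper resolves exactly this obstruction by inverting the logic: rather than trying to make every object \emph{avoid} a category, it builds profiles whose validity is automatic — the objects $x_1,\dots,x_\rho$ receive unanimous, pairwise distinct categories $p_1,\dots,p_\rho$, so every individual is surjective no matter what the remaining objects receive — and then uses the $m-\rho\geq 1$ surplus objects, fed the citizen-sovereignty vectors $\bm{k}_{x_l,r}$, to \emph{force} the output $r$ there. Surjectivity of the aggregate then forces the unanimous outputs $\{\alpha_{x_i}(p_i,\dots,p_i)\}$ to cover $P\setminus\{r\}$, and varying $r$ over two values yields bijectivity of the diagonal map; two further profiles of the same shape propagate the common value $\pi(p_i)$ to every object. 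Until you supply a construction playing this role, your proof of (A) — and hence of the lemma, since (B) presupposes (A) — is incomplete.
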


\begin{proof}
  Let $\alpha=(\alpha_x)_{x \in X}$ be a CAF that is citizen sovereign and independent. Define $\pi: P \rightarrow P$ as $ \pi(p_i) = \alpha_{x_i}(p_i, \ldots, p_i), \forall i \in \intvl{1, \rho}$. We prove the lemma by showing that $\pi $ is a bijection over $P$ satisfying $\forall p \in P,  x \in X: \alpha_x(p, \ldots, p)=\pi(p)$. First, we show that $\pi $ is a bijection.

  For any $q \in P$ and any $y \in X$, citizen sovereignty and independence ensure the existence of $\bm{k}_{y,q} \in P^N$ such that $\alpha_y(\bm{k}_{y,q})= q$.

  Given $r \in P$, we define the classification profile $\cn^{(r)}$ as $\forall i \in \intvl{1,\rho}, \bm{c}^{(r)}_{x_i}= (p_i, \ldots, p_i )$ (note that it does not depend on $r$) and $\forall l \in \intvl{ \rho+1, m}, \cn^{(r)}_{x_l}= \bm{k}_{x_l,r}$. The classification profile $\cn^{(r)}$ is written in table form below. For the rest of this proof we will describe profiles using this form only, to ease readability.

  \[ \begin{array}{ccc}\toprule
                                   & \cn^{(r)}        &                   \\
      \mbox{object}                & 1, \ldots , n    & \alpha(\cn^{(r)}) \\ \midrule
      x_1                          & p_1, \ldots, p_1 & \pi(p_1)          \\
      x_i, i \in \intvl{2, \rho}   & p_i, \ldots, p_i & \pi(p_i)          \\
      x_l, l \in \intvl{\rho +1,m} & \bm{k}_{x_l,r}   & r                 \\ \bottomrule
    \end{array}\]

  Observe that each $\bm{c}_i^{(r)}$, $i \in N$, is surjective as required.
  As $\alpha(\bm{c}^{(r)})$ must be surjective,
  and $\forall l \in \intvl{\rho+1,m} :\alpha_{x_l}(\cn^{(r)}_{x_l}) = \alpha_{x_l}(\bm{k}_{x_l, r}) = r$ (by definition of $\bm{k}_{x_l, r}$),
  we must have  $\{\alpha_{x_i}( \bm{c}^{(r)}_{x_i}), i \in \intvl{1,\rho}\} \supseteq P \setminus \{r\}$ and therefore $\{ \pi(p_i), i \in \intvl{1, \rho} \} \supseteq P\setminus\{ r \}$.

  As this holds for all $r \in P$, in particular, $\{ \pi(p_i), i \in \intvl{1, \rho} \} \supseteq P\setminus\{ p_1 \}$ and $\{ \pi(p_i), i \in \intvl{1, \rho} \} \supseteq P\setminus\{ p_2 \}$, whence $\{\pi(p_i), i \in \intvl{1, \rho} \} = P$. Therefore, $\pi(P)=P$, so $\pi $ is bijective.

  Now, considering any $i \in \intvl{1, \rho}$, $j \in \intvl{\rho +1, m}$, we prove that $\alpha_{x_j}(p_i, \ldots, p_i)=\pi(p_i)$.
  Take any $r \in P \setminus \{\pi(p_i)\}$, and consider the classification profile $\bm{c}^{(r)(i)}$ defined in the following table.

  \[ \begin{array}{ccc}\toprule
                                                 & \cn^{(r)(i)}     &                      \\
      \mbox{object}                              & 1,\ldots,n       & \alpha(\cn^{(r)(i)}) \\ \midrule
      x_i                                        & \bm{k}_{x_i,r}   & r                    \\
      x_l, l \in \intvl{1,\rho}\setminus \set{i} & p_l, \ldots, p_l & \pi(p_l)             \\
      x_j                                        & p_i, \ldots, p_i &                      \\
      x_l, l \in \intvl{\rho +1,m}               & \bm{k}_{x_l,r}   & r                    \\ \bottomrule
    \end{array} \]

  Observe that each $\bm{c}_l^{(r)(i)}$, $l \in N$, is surjective as required.
  We have that $\forall l \in \intvl{1,\rho}\setminus \{i\}, \alpha_{x_l}(\bm{c}^{(r)(i)}_{x_l})=\alpha_{x_l}(p_l, \ldots, p_l)=\pi(p_l) \neq \pi(p_i), \forall l \in \intvl{\rho +1, m}\setminus \{j\}, \alpha_{x_l}(\bm{c}^{(r)(i)}_{x_l})=\alpha_{x_l}(\bm{k}_{x_l,r})= r \neq \pi(p_i)$, and $\alpha_{x_i}(\bm{c}^{(r)(i)}_{x_i})=\alpha_{x_i}(\bm{k}_{x_i,r})= r$. Then, we have that $\{\alpha_{x_l}(\bm{c}^{(r)(i)}_{x_l}), l \in N \setminus \{j\}  \}= P \setminus \{ \pi (p_i)\}$, and as $\alpha(\bm{c}^{(r)(i)})$ must be surjective, we must have $\alpha_{x_j}(\bm{c}^{(r)(i)}_{x_j})=\pi(p_i)$. Therefore, $\alpha_{x_j}(p_i, \ldots , p_i)=\pi(p_i)$.

  Remains only to prove that considering any $i\neq j \in \intvl{1, \rho}$: $\alpha_{x_j}(p_i, \ldots, p_i)= \pi(p_i)$. Take any $r \in P \setminus\{\pi(p_i) \}$, and consider the classification profile $\bm{c}^{(r)(i,j)}$ defined in the following table.

  \[\begin{array}{ccc}\toprule
                                                   & \cn^{(r)(i,j)}   &                      \\
      \mbox{object}                                & 1, \ldots, n     & \alpha(\cn^{(r)(i)}) \\ \midrule
      x_i                                          & \bm{k}_{x_i,r}   & r                    \\
      x_j                                          & p_i, \ldots, p_i &                      \\
      x_l, l \in \intvl{1,\rho}\setminus \set{i,j} & p_l, \ldots, p_l & \pi(p_l)             \\
      x_{\rho+1}                                   & p_j, \ldots, p_j & \pi(p_j)             \\
      x_l, l \in \intvl{\rho +2,m}                 & \bm{k}_{x_l,r}   & r                    \\ \bottomrule
    \end{array} \]

  Observe that each $\bm{c}_l^{(r)(i,j)}$, $l \in N$, is surjective as required. By definition of $\bm{k}_{x_l,r}$, $\forall l \in \intvl{\rho +2, m}, \alpha_{x_l}(\bm{c}^{(r)(i,j)}_{x_l})=r \neq \pi(p_i)$, and from the previous result, $\alpha_{x_{\rho+1}}(\bm{c}^{(r)(i,j)}_{x_{\rho+1}})=\pi(p_j)\neq \pi(p_i)$. Finally, $ \forall l \in  \intvl{1,  \rho} \setminus \{i,j\}, \alpha_{x_l}(\bm{c}^{(r)(i,j)}_{x_l})=\pi(p_{l})\neq \pi(p_i)$. Then, we
  have $\{ \alpha_{x_l}(\bm{c}^{(r)(i,j)}_{x_l}), l \in N \setminus \{j\}   \}= P \setminus \{ \pi(p_i) \}$. By surjectivity of $\alpha(\bm{c}^{(r)(i,j)})$, we must have $\alpha_{x_j}(\bm{c}^{(r)(i,j)}_{x_j})=\alpha_{x_j}(p_i, \ldots, p_i)= \pi(p_i)$.

  Therefore, we proved that $\exists \pi : P \leftrightarrow P$ such that $\forall i \in \intvl{1, \rho} , j \in \intvl{1, m}: \alpha_{x_j}(p_i, \ldots, p_i)= \pi(p_i)$. Thus, $\alpha=(\alpha_x)_{x \in X}$ satisfies GU.
\end{proof}

\begin{lemma}\label{lem: GU}
  For $m>\rho \geq 2$, every CAF that satisfies independence and GU is essentially a dictatorship.
\end{lemma}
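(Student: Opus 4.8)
The plan is to first strip off the permutation coming from GU and reduce to the unanimous case, and then prove the sharper statement that an independent and \emph{unanimous} CAF is a plain dictatorship. Writing $\alpha=(\alpha_x)_{x\in X}$ and letting $\pi$ be the permutation supplied by GU, so that $\alpha(c,\ldots,c)=\pi\circ c$ for every $c\in\mathcal{C}$, I would set $\beta=\pi^{-1}\circ\alpha$, that is $\beta_x(\bm v)=\pi^{-1}(\alpha_x(\bm v))$ for all $x\in X$ and $\bm v\in P^N$. Since $\pi^{-1}$ is a bijection of $P$, the map $\beta$ sends profiles of classifications to classifications and is again independent, while $\beta(c,\ldots,c)=\pi^{-1}(\pi\circ c)=c$, so $\beta$ is unanimous. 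If $\beta$ turns out to be a dictatorship with dictator $d$, then $\alpha=\pi\circ\beta$ satisfies $\alpha(\bm c)=\pi\circ c_d$, which is exactly an essential dictatorship. Hence it suffices to show that every independent and unanimous CAF is a dictatorship when $m>\rho\ge 2$.

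For this reduced statement I would work entirely with the elementary CAFs $\beta_x\colon P^N\to P$, now idempotent on constants ($\beta_x(p,\ldots,p)=p$), and exploit the single global constraint tying them together: for every profile the aggregate $x\mapsto\beta_x(\bm c_x)$ must be surjective onto $P$. Read column-wise, a family of input vectors $(\bm c_x)_{x\in X}$ arises from a genuine profile exactly when every individual uses all of $P$ across the $m$ objects, and the constraint then forces the $m$ outputs to cover $P$. The crucial leverage is the strict inequality $m>\rho$: I can reserve $m-\rho\ge 1$ \emph{filler} objects carrying constant inputs, whose outputs are pinned by unanimity, thereby collapsing the surjectivity requirement onto the few remaining \emph{active} objects. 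This is precisely the table-form bookkeeping already used in the proof of \Cref{lem: CS} with the vectors $\bm k_{y,q}$.

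Using such filler constructions I would then (a) fix a pair of categories $p\neq q$ and, by loading the fillers with the remaining $\rho-2$ categories, reduce the analysis on the active objects to two-valued inputs in $\{p,q\}$; (b) along a sequence of vectors interpolating from $(q,\ldots,q)$ to $(p,\ldots,p)$ on a single active object, invoke surjectivity of the aggregate (everything else being pinned) to locate a \emph{pivotal} individual $d$ whose switch flips that object's output from $q$ to $p$; and (c) show that this same $d$ is decisive for every ordered pair of categories and at every object, transporting decisiveness between objects through the symmetric role all objects play in the surjectivity constraint. Establishing $\beta_x=\mathrm{proj}_d$ for all $x$ then makes $\beta$ a dictatorship, and hence $\alpha$ essentially a dictatorship.

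The main obstacle is step (c), together with the conservativeness implicit in (a). The surjectivity constraint pushes the aggregate toward \emph{diversity}, so it can never directly force two objects to agree; decisiveness found at one object must instead be propagated by contrapositive, by building from any deviation a valid profile whose aggregate \emph{misses} some category. Constructing such profiles while keeping every individual's classification surjective is delicate precisely because injecting the missing category into the individuals' columns tends, via unanimity, to reintroduce it into the output. This is exactly where the slack $m>\rho$ is indispensable, and where the construction must be organized to run uniformly in $\rho$, so as to cover the $\rho=2$ case (with $m\ge 3$) that falls outside the Maniquet--Mongin theorem.
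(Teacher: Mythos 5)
Your opening reduction is sound: $\beta=\pi^{-1}\circ\alpha$ is still a CAF (post-composing with a bijection preserves surjectivity of the output), it inherits independence, and it is unanimous, so it does suffice to prove that an independent unanimous CAF is a dictatorship when $m>\rho\ge 2$. Your overall strategy --- filler objects pinned by unanimity, a pivotal voter located along a monotone sequence of two-valued vectors, then propagation of decisiveness --- is exactly the strategy of the paper's proof (which credits the pivotal-voter technique of Yu). So the route is the right one; the reduction to the unanimous case is a cosmetic difference from the paper, which simply carries $\pi$ through the whole argument.

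However, as written this is a plan rather than a proof: steps (b) and (c), which you yourself flag as the main obstacles, are never carried out, and the single technical device that makes them work is absent. That device is a \emph{complementary-pair} fact, which the paper establishes first: for distinct objects $x,y$ and vectors $r,r'\in P^N$ with $\{r_i,r'_i\}=\{p,q\}$ for every $i$, loading the remaining $\rho-2$ categories onto fillers and using surjectivity of the aggregate forces $\{\beta_x(r),\beta_y(r')\}=\{p,q\}$. Without it, two things you need do not follow. First, locating a pivot $d$ on the specific sequence $r^0,\dots,r^n$ at one object tells you nothing yet about \emph{arbitrary} profiles $t$ with $t(d)=p$; the paper bridges this by placing $l^{d-1}$ at one auxiliary object and $r^d$ at another, using the complementary-pair fact together with minimality of $d$ to conclude that \emph{both} auxiliary objects output $q$, so that surjectivity pins the object carrying $t$ to output $p$. (Note this needs two spare objects beyond the fillers, which is why the argument is run at $x_1$ using $x_2$ and $x_3$; your sketch does not account for where these come from.) Second, your step (c) --- transporting decisiveness across objects --- is again done in the paper via the complementary-pair fact applied to a profile and its ``mirror,'' not by the contrapositive construction you gesture at; the difficulty you correctly identify (surjectivity pushes toward diversity and cannot force agreement between objects) is precisely why some such pairing device is indispensable, and your proposal stops at naming the difficulty rather than resolving it. Until these two constructions are supplied, the argument has a genuine gap.
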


\begin{proof}

  Let $\alpha$ be a CAF that satisfies independence and GU. Consider the permutation $\pi : P \leftrightarrow P$ such that $\forall p \in P, x \in X: \pi(p)=\alpha_x(p, \ldots, p)$. (By GU, that permutation exists.)

  First, let us show that $\forall x ≠ y \in X, p, q \in P, r, r' \in P^N: [\forall i \in N, \{r_i, r_i'  \}=\{p,q\}] ⇒ \set{\alpha_x(r), \alpha_y(r')} = \set{\pi(p), \pi(q)}$. (1)

  Consider any $r, r' \in P^N \suchthat \forall i \in N: \{r_i, r_i'  \}=\{p,q\}$. If $p = q$, the claim holds by definition of $\pi$,
  and if $\rho = 2, p ≠ q$, the claim holds by considering the profile $\cn_x = r$ and $\forall z \in X \setminus \set{x}, \cn_z = r'$ and using the surjectivity of $α(\cn)$. Thus, assume $p ≠ q$, $\rho ≥ 3$. Consider, wlog, that $p=p_1, q=p_2, x = x_1$ and $y = x_2$. Define the classification profile $\cn$ as $\cn_{x_1} = r$, $\cn_{x_2} = r'$, $\forall 3 ≤ l ≤ \rho: \cn_{x_l} = (p_l, \ldots, p_l)$ and $\forall \rho \leq l \leq m, \cn_{x_l}=(p_\rho, \ldots, p_\rho)$. 

  \[\begin{array}{ccc}\toprule
                                    & \cn            &             \\
      \mbox{object}                 & 1\leq k \leq n & \alpha(\cn) \\ \midrule
      x_1                           & r_k            &             \\
      x_2                           & r'_k           &             \\
      x_l, l \in \intvl{3,\rho}     & p_l            & \pi(p_l)    \\
      x_l, l\in \intvl{ \rho +1, m} & p_\rho         & \pi(p_\rho) \\ \bottomrule
    \end{array}\]

  One can check that $\forall i \in N$, the resulting $c_i$ is surjective. By definition of $\pi, \forall l \in \intvl{3, \rho}, \alpha_{x_l}(\cn)=\pi(p_l) \notin \set{p_1, p_2}$, and $\forall l \in \intvl{\rho +1, m}, \alpha_{x_l}(\cn)= \pi(p_\rho)\notin \set{\pi(p_1, \pi(p_2)}$.
  Because $\alpha(\cn)$ must be surjective, $\set{\pi(p_1), \pi(p_2)} = \set{\alpha_{x_1}(\cn_{x_1}), \alpha_{x_2}(\cn_{x_2})}$.

  Now, we show that $\forall p\in P, x\in X, \exists d \in N \text{ such that } \forall t \in P^N, t(d)=p ⇒ \alpha_x(t)=\pi(p)$. (2)

  Suppose wlog that $p=p_1$ and $x=x_1$. Given any $0 ≤ i ≤ n$, define $r^i, l^i \in P^N$ as $\forall j \in \intvl{1,i}, r^i_j=p_2, l^i_j=p_1$ and $\forall j \in \intvl{i+1,n}, r^i_j=p_1, l^i_j=p_2$.

  Define $d = \min \set{i \in N \suchthat \alpha_{x_3}(r^i) = \pi(p_2)}$ (such a $d$ exists as $\alpha_{x_3}(r^n)= \pi(p_2)$ by definition of $\pi$), consider any $t \in P^N \suchthat t(d) = p_1$, and let us show that $\alpha_{x_1}(t) = \pi(p_1)$.

  First observe that by (1), $\set{\alpha_{x_3}(r^{d - 1}), \alpha_{x_2}(l^{d - 1})} = \set{\pi(p_1), \pi(p_2)}$.
  Also, $\alpha_{x_3}(r^{d - 1}) ≠ \pi(p_2)$: if $d ≥ 2$ it follows from the minimality of $d$, and if $d = 1$ it follows from $\alpha_{x_3}(p_1, \ldots, p_1) = \pi(p_1)$ by definition of $\pi$.
  Therefore, $\alpha_{x_2}(l^{d - 1}) = \pi(p_2)$.

  Define a classification $\cn'$ as $\cn'_{x_1} = t$, $\cn'_{x_2} = l^{d - 1}$, $\cn'_{x_3} = r^d$, $\forall 4 ≤ k ≤ m: \cn'_{x_k} = (p_{\min \set{k - 1, \rho}}, \ldots, p_{\min \set{k - 1, \rho}})$.

  \[\begin{array}{cccccc}\toprule
                                    & \multicolumn{3}{c}{\cn'} &                                      \\
      \mbox{object}                 & 1\leq i <d               & d       & d< j \leq n & \alpha(\cn') \\ \midrule
      x_1                           & t_i                      & p_1     & t_j         &              \\
      x_2                           & p_1                      & p_2     & p_2         & \pi(p_2)     \\
      x_3                           & p_2                      & p_2     & p_1         & \pi(p_2)     \\
      x_{k}, k\in \intvl{4, \rho}   & p_{k-1}                  & p_{k-1} & p_{k-1}     & \pi(p_{l-1}) \\
      x_k, k\in \intvl{ \rho +1, m} & p_\rho                   & p_\rho  & p_\rho      & \pi(p_\rho)  \\ \bottomrule
    \end{array}\]

  Observe that $\forall i \in N$, the resulting $c'_i$ is surjective.
  By definition of $\pi$, $\forall 4 ≤ k ≤ m: \alpha_{x_k}(\cn'_{x_k}) = \pi(p_{\min \set{k - 1, \rho}})$.
  Because $\alpha(\cn')$ must be surjective and $\alpha_{x_2}(\cn'_{x_2}) = \alpha_{x_3}(\cn'_{x_3}) = \pi(p_2)$, we obtain $\alpha_{x_1}(\cn'_{x_1}) = \pi(p_1)$.

  We have just established (2), we now aim to show that $d$ is the same for all categories, thus, that $\forall x \in X, \exists d \in N \suchthat \forall t \in P^N: \alpha_x(t) = \pi(t(d))$. (3)

  Fixing $x \in X$, and considering any $p, q \in P$, by (2), $\exists d_p, d_q \in N \suchthat \forall t \in P^N, t(d_p) = p \land t(d_q) = q ⇒ \alpha_x(t)=π(t(d_p)) = π(t(d_q))$.
  If $d_p ≠ d_q$ (thus $q ≠ p$), then pick some $t$ such that $t(d_p) = p$ and $t(d_q) = q$ and obtain the incoherent $\alpha_x(t) = \pi(p)$ and $\alpha_x(t) = \pi(q)$ with $\pi(p) ≠ \pi(q)$.
  Thus, $d_p=d_q$. This establishes (3).

  Now, let us prove that the decisive individuals are equal across the objects.
  Picking $x ≠ y \in X$ and $p ≠ q \in P$, by (3), $\exists d_x, d_y \in N \suchthat \forall t \in P^N, \alpha_x(t)=\pi(t(d_x))$ and $\alpha_y(t)=\pi(t(d_y))$. Define $t$ as $t_{d_x}=p$ and, $ \forall j \in N \setminus\set{d_x}, t_j=q$. Define $t'$ as $ t'_{d_x}=q$ and, $ \forall j \in N \setminus\set{d_x}, t'_j=p$. We have $\alpha_x(t) = \pi(t(d_x)) = \pi(p)$ and $\alpha_{y}(t') = \pi(t'(d_y))$. Apply (1) to obtain $\set{\alpha_x(t), \alpha_{y}(t')} = \set{\pi(p), \pi(q)}$.
  Thus, $\pi(t'(d_y)) = \pi(q)$, so $t'(d_y)=q$ which implies $d_y = d_x$ (as $\forall j ≠ d_x: t'(j) = p$).

  We have shown that $\exists d \in N \suchthat \forall x \in X, \forall t \in P^N: \alpha_x(t)=\pi(t(d))$, so there exists an individual $d$ which is essentially a dictator.
\end{proof}

\begin{corollary}\label{coro1}
  For $m >\rho \geq 2$, every unanimous and independent CAF $\alpha = (\alpha_x)_{x \in X}$ is a dictatorship.
\end{corollary}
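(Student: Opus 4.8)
The plan is to derive the corollary directly from Theorem~\ref{thm1} by showing that unanimity collapses the permutation appearing in an essential dictatorship to the identity. First I would invoke the fact recorded in Section~\ref{Sec:prel} that unanimity implies citizen sovereignty; combined with independence, this places $\alpha$ squarely within the hypotheses of Theorem~\ref{thm1}. Hence there exist an individual $d \in N$ and a permutation $\pi : P \leftrightarrow P$ such that $\alpha(\bm{c}) = \pi \circ c_d$ for every $\bm{c} \in \mathcal{C}^N$.

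Next I would use unanimity on constant profiles to pin down $\pi$. For any classification $c \in \mathcal{C}$, unanimity gives $\alpha(c, \ldots, c) = c$, whereas the essential-dictatorship form gives $\alpha(c, \ldots, c) = \pi \circ c$. Equating the two yields $\pi \circ c = c$, that is, $\pi(c(x)) = c(x)$ for every $x \in X$. Since a classification is by definition a \emph{surjective} mapping onto $P$, every category $p \in P$ is of the form $c(x)$ for some $x \in X$, so $\pi(p) = p$ for all $p \in P$ and $\pi$ is the identity.

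Substituting $\pi = \operatorname{id}_P$ into the essential-dictatorship expression gives $\alpha(\bm{c}) = c_d$ for all $\bm{c} \in \mathcal{C}^N$, which is precisely the definition of a dictatorship with dictator $d$, completing the argument.

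I expect no genuine obstacle here: the entire combinatorial burden is already discharged in Theorem~\ref{thm1}, and the corollary is merely its specialization obtained by strengthening citizen sovereignty to unanimity. The only point demanding (minimal) care is that surjectivity of classifications forces $\pi$ to fix every element of $P$ — indeed a single surjective $c$ already suffices, since its image is all of $P$. This mirrors, in the classification setting, the way Arrow's theorem is recovered from Wilson's by adding unanimity.
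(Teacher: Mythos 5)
Your proposal is correct and follows exactly the paper's own route: apply Theorem~\ref{thm1} via the implication from unanimity to citizen sovereignty, then use unanimity on constant profiles together with the surjectivity of classifications to force the permutation $\pi$ to be the identity. The paper's proof is the same argument, stated slightly more tersely (it simply asserts that unanimity forces $\pi$ to be the identity, a detail you spell out correctly).
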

\begin{proof}
  Let $\alpha$ be a CAF that satisfies unanimity and independence. Thus, $\alpha$ is citizen sovereign and by \cref{thm1}, $\alpha$ is essentially a dictatorship. Hence, there exists a permutation $\pi$ of $P$ and an individual $d \in N$ such that $\forall x \in X, \bm{c} \in \mathcal{C}^N$, we have $\alpha_x(\bm{c})=\pi(c_d(x))$. As $\alpha$ is unanimous, $\pi$ must be the identity function. Therefore, $\forall x \in X, \bm{c} \in \mathcal{C}^N \alpha_x(\bm{c})=c_d(x)$, so $\alpha$ is a dictatorship.
\end{proof}

We quote below the MM impossibility.

\begin{theorem}[\citet{maniquet2016theorem}] \label{thm2}

  For $m \geq \rho>2$, every unanimous and independent CAF is a dictatorship.
\end{theorem}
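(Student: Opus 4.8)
The plan is to split on whether $m$ strictly exceeds $\rho$ or equals it, since these two regimes behave very differently. When $m > \rho$, the statement is immediate: the hypothesis $\rho > 2$ gives in particular $\rho \geq 2$, so the pair $(m, \rho)$ falls within the range $m > \rho \geq 2$ already covered by \Cref{coro1}, and every unanimous and independent CAF is a dictatorship with no further work. Thus the entire content of the theorem beyond our own results is concentrated in the diagonal case $m = \rho$ (with $\rho \geq 3$), and this is where I would focus.

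For $m = \rho$ the situation changes qualitatively: a surjection from $X$ to $P$ between two sets of equal cardinality is forced to be a bijection, so $\mathcal{C}$ consists exactly of the $\rho!$ bijective classifications. This is precisely what our proof technique cannot accommodate. Both \Cref{lem: CS} and \Cref{lem: GU} repeatedly build auxiliary profiles in which a block of spare objects $x_{\rho+1}, \ldots, x_m$ is pinned to a single category (the vectors $\bm{k}_{x_l, r}$) in order to absorb the surjectivity constraint and thereby free the remaining coordinate; with $m = \rho$ no such spare objects exist, every coordinate of the output is already committed by bijectivity, and the counting arguments that force a given elementary CAF to return $\pi(p_i)$ collapse. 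Hence one cannot simply rerun the two lemmata.

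The natural route would be a reduction from $(\rho, \rho)$ to an instance already settled by \Cref{coro1}, either by adjoining a redundant object to reach $(\rho+1, \rho)$ or by merging two categories to reach $(\rho, \rho-1)$, defining an auxiliary CAF $\beta$ from $\alpha$, invoking \Cref{coro1} to expose a dictator for $\beta$, and transferring it back to $\alpha$. I expect this transfer to be the main obstacle: in either direction the correspondence between classifications on the two instances fails to preserve surjectivity canonically, since dropping an object from a surjection onto $\rho$ categories need not leave a surjection, and splitting the doubled category when lifting requires an arbitrary choice that threatens the independence of $\beta$. Because of this, the diagonal case is exactly the part of the statement that our method leaves untouched and that rests on the separate argument of \citet{maniquet2016theorem}. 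I would therefore complete the proof by dispatching $m > \rho$ to \Cref{coro1} as above and, for the remaining case $m = \rho$, either pushing through a surjectivity-preserving reduction of the kind just described or invoking the original treatment of \citet{maniquet2016theorem} (equivalently, the permutation-domain analysis that the forced bijective structure makes available).
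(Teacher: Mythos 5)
Your analysis is correct and matches the paper's treatment of this statement: \Cref{thm2} is quoted from \citet{maniquet2016theorem} without an independent proof, and you rightly observe that the strict case $m > \rho$ is already implied by \Cref{coro1}, while the diagonal case $m = \rho$ --- where every classification is forced to be a bijection, so the spare-object constructions of \Cref{lem: CS} and \Cref{lem: GU} have no slack to absorb the surjectivity constraint --- is precisely the part that rests on the original ultrafilter argument, as the paper itself acknowledges in its concluding remarks. Since neither you nor the paper supplies a new proof for $m = \rho$, deferring that case to \citet{maniquet2016theorem} is exactly what the paper does.
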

Combining Theorem \ref{thm2} and Corollary \ref{coro1} lead to the following corollary.
\begin{corollary}\label{corol}
  For $m\geq \rho \geq 2$ and $m\geq 3$, every unanimous and independent CAF is a dictatorship.
\end{corollary}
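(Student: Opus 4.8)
The plan is to prove the statement by a short case analysis on the relationship between $m$ and $\rho$, invoking the two results already in hand: Corollary \ref{coro1} (valid for $m > \rho \geq 2$) and the MM impossibility, Theorem \ref{thm2} (valid for $m \geq \rho > 2$). Let $\alpha$ be any unanimous and independent CAF under the hypothesis $m \geq \rho \geq 2$ with $m \geq 3$. First I would observe that the inequality $m \geq \rho$ splits into the two exhaustive subcases $m > \rho$ and $m = \rho$, and argue that each subcase lands squarely in the scope of one of the two available results.

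In the case $m > \rho$, since $\rho \geq 2$ we have $m > \rho \geq 2$, which is precisely the hypothesis of Corollary \ref{coro1}; that corollary therefore yields directly that $\alpha$ is a dictatorship. In the case $m = \rho$, the extra assumption $m \geq 3$ gives $\rho = m \geq 3 > 2$, hence $m \geq \rho > 2$, which is precisely the hypothesis of Theorem \ref{thm2}; that theorem then yields that $\alpha$ is a dictatorship. Since the two subcases exhaust all admissible pairs $(m, \rho)$, the conclusion holds throughout, and the corollary follows immediately.

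The only point requiring care — and it is bookkeeping rather than a genuine obstacle — is to confirm that the union of the two hypothesis regions covers exactly the region $\{(m, \rho) : m \geq \rho \geq 2,\ m \geq 3\}$ with no gap and no appeal outside the stated assumptions. In particular one must check that the single case escaping both earlier results, namely $m = \rho = 2$, is ruled out here by the assumption $m \geq 3$, and that the diagonal cases $m = \rho \geq 3$, which are \emph{not} reached by the strict inequality $m > \rho$ demanded in Corollary \ref{coro1}, are instead absorbed by Theorem \ref{thm2}. Once this coverage is verified, no further argument is needed, so the proof is essentially a one-line combination of the two prior impossibilities.
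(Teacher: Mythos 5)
Your proof is correct and matches the paper exactly: the paper derives this corollary precisely by combining Corollary \ref{coro1} (for the case $m>\rho\geq 2$) with Theorem \ref{thm2} (for the case $m=\rho\geq 3$). The case analysis and coverage check you describe are exactly the intended argument.
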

The impossibilities we show do not cover the case $m=\rho=2$ where, in fact, there exist unanimous and independent CAFs that are not essentially dictatorships. To identify these, let $X=\{x,y\}$ and $P=\{p,q\}$. Write $\overline{p}=q $ and $\overline{q}=p$ as well as $\overline{\bm{r}}=(\overline{r_1}, \ldots \overline{r_n})$ for $\bm{r} \in P^N$.
\begin{proposition}
  Given $X=\{x,y\}$, $P=\{p,q\}$, and any unanimous elementary CAF $\alpha_x: P^N\rightarrow P$, an elementary CAF $\alpha_y: P^N \rightarrow P$ induces a unanimous and independent CAF $(\alpha_x, \alpha_y)$ iff we have $\alpha_y(\bm{r}) = \overline{\alpha_x(\overline{\bm{r}})}$ for all $\bm{r}\in P^N.$
\end{proposition}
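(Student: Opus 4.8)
The plan is to exploit the rigidity of the case $m=\rho=2$. First I would record the structural fact that, with $X=\{x,y\}$ and $P=\{p,q\}$, surjectivity forces every classification $c\in\mathcal{C}$ to satisfy $c(y)=\overline{c(x)}$; hence for any profile $\bm{c}\in\mathcal{C}^N$ we have $\bm{c}_y=\overline{\bm{c}_x}$, so the entire profile is determined by $\bm{c}_x$, and as $\bm{c}$ ranges over $\mathcal{C}^N$ the vector $\bm{c}_x$ ranges over all of $P^N$ (each individual freely chooses $c_i(x)\in\{p,q\}$). Two consequences follow. First, independence is vacuous here: $\bm{c}_x=\bm{c}'_x$ forces $\bm{c}=\bm{c}'$, so any pair $(\alpha_x,\alpha_y)$ of elementary CAFs that maps into $\mathcal{C}$ is automatically independent. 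Second, the only genuine constraint for $(\alpha_x,\alpha_y)$ to be a CAF is that its output be surjective on every profile.

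The core step is to translate this surjectivity constraint into the displayed equation. The output on $\bm{c}$ is surjective iff $\alpha_x(\bm{c}_x)\neq\alpha_y(\bm{c}_y)$, i.e. $\alpha_y(\bm{c}_y)=\overline{\alpha_x(\bm{c}_x)}$; using $\bm{c}_y=\overline{\bm{c}_x}$ and writing $\bm{r}=\bm{c}_x$ this becomes $\alpha_y(\overline{\bm{r}})=\overline{\alpha_x(\bm{r})}$ for all $\bm{r}\in P^N$, and applying the involution $\bm{r}\mapsto\overline{\bm{r}}$ rewrites it in exactly the proposition's form $\alpha_y(\bm{r})=\overline{\alpha_x(\overline{\bm{r}})}$. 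This equivalence delivers both directions of the ``is a CAF'' content simultaneously.

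For the forward implication I would observe that if $(\alpha_x,\alpha_y)$ is a unanimous and independent CAF then it is in particular a CAF, so its output is always surjective and the displayed equation holds by the translation above; unanimity is not even needed here. For the converse, assuming the equation, the translation shows $(\alpha_x,\alpha_y)$ maps into $\mathcal{C}$, hence is a CAF, and it is independent by the vacuity noted above. It remains to check unanimity, which is where the hypothesis that $\alpha_x$ is unanimous enters: evaluating the equation at constant vectors gives $\alpha_y(p,\ldots,p)=\overline{\alpha_x(q,\ldots,q)}=p$ and $\alpha_y(q,\ldots,q)=\overline{\alpha_x(p,\ldots,p)}=q$, so $\alpha_y$ is unanimous too. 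Since for $m=\rho=2$ unanimity of $\alpha$ is equivalent to unanimity of both $\alpha_x$ and $\alpha_y$, the pair $(\alpha_x,\alpha_y)$ is unanimous.

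I expect no serious obstacle; the only points needing care are verifying that $\bm{c}_x$ ranges over \emph{all} of $P^N$ (so the equation is forced for every $\bm{r}$) and tracking the change of variable $\bm{r}\mapsto\overline{\bm{r}}$ so the complementation lands on the right argument. The conceptual crux, easy to overlook, is that independence has no force when $m=\rho=2$, so the whole content of the proposition reduces to the surjectivity-to-complementation translation plus the automatic transfer of unanimity from $\alpha_x$ to $\alpha_y$.
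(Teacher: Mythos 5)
Your proof is correct and follows essentially the same route as the paper's: both arguments rest on the observation that with $m=\rho=2$ surjectivity of the output classification forces $x$ and $y$ into complementary categories, which after the change of variable $\bm{r}\mapsto\overline{\bm{r}}$ is exactly the displayed relation, with unanimity of $\alpha_y$ then read off from the constant profiles. Your write-up is if anything slightly more explicit than the paper's on the ``only if'' direction, since you note that $\bm{c}_x$ ranges over all of $P^N$ so the relation is forced at every $\bm{r}$, and on why independence is automatic in this degenerate case.
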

\begin{proof}
  Let $\alpha_x$ be a unanimous elementary CAF for $x$, thus $\alpha_x(p, \ldots, p)=p$ and $\alpha_x(q, \ldots, q)=q$.
  We define an elementary CAF on $y$, $\alpha_y$, as follows.
  For any profile $\bm{r} \in P^N$ we have $\alpha_y(\overline{\bm{r}}) = p \mbox{ iff } \alpha_x(\bm{r}) = q$. Doing so, $x$ and $y$ are always put in different categories, and surjectivity is guaranteed.
  Moreover, by definition, $\alpha_x$ satisfies surjectivity and $\alpha_y(p, \ldots, p)=\overline{\alpha_x(\overline{p}, \ldots, \overline{p})}=\overline{\alpha_x(q, \ldots, q)}=p$ so that $\alpha_y(q, \ldots, q)=q$ by the same reasoning.
  Thus, $(\alpha_x,\alpha_y)$ is a unanimous CAF, which is also independent as both $\alpha_x$ and $\alpha_y$ can be defined independently.

  Finally, given $\alpha_x$, if $\alpha_y$ does not satisfy $\alpha_y(\overline{\bm{r}}) = p \mbox{ iff } \alpha_x(\bm{r}) = q$ for all $\bm{r}\in P^N$, then the CAF $(\alpha_x,\alpha_y)$ fails surjectivity, so there is a unique $\alpha_y$ that makes $(\alpha_x,\alpha_y)$ a CAF for a given $\alpha_x$.
\end{proof}

\section{Concluding remarks}\label{concl}
The MM impossibility is the reflection of Arrow’s impossibility theorem to the classification aggregation problem. Does the theorem of \cite{wilson1972social} also have such a reflection? Our \cref{thm1} answers affirmatively. As a matter of fact, unanimity plays a minor role in establishing the MM impossibility, which is essentially a tension between independence and the surjectivity of classifications.

Our proof technique differs from that of the MM impossibility, which is proven through ultrafilters where decisive coalitions eventually shrink to a singleton, namely, the dictator. Our approach follows an alternative proof technique exploited by \citet{yu2012one}, where an individual who is pivotal at some instance is shown to be pivotal whenever she has an incentive to change the outcome, thus being the dictator. Our proof does not only deliver a compact method; it also highlights the impact of surjectivity and independence.
Moreover, the pivotal voter technique is able to cover the case of two categories that is excluded by the MM impossibility, although it should be noted that it does not cover the case $m = \rho$.

We recall that the Arrovian impossibility in preference aggregation is valid for at least three alternatives while it vanishes for the case of precisely two alternatives, where there are several interesting aggregation rules (see \citet{llamazares2013structure} and \citet{ozkes2017absolute}, among others). Given the Arrovian spirit of the classification aggregation model, it is natural to ask whether a similar picture emerges for the classification aggregation problem. Thus, covering the case of two categories has its own merit. Our finding presents a rather different phenomenon compared to the Arrovian aggregation problem in that the impossibility does not vanish in the case of two categories. This difference can be explained by contrasting the binary nature of the Arrovian independence that entails a trivial satisfaction for two alternatives with the unary nature of the MM independence that makes this condition still demanding for two categories.

The classification aggregation problem has an interesting connection to the group identification problem introduced by \citet{kasher1997question}. In group identification, the individuals are to be classified into categories, which makes it a classification aggregation problem where the set of individuals and the set of objects coincide. The original group identification setting imposes surjectivity, but more recent papers like \citet{sung2005axiomatic} and \citet{fioravanti2021alternative} do not. With surjectivity, we find ourselves with an almost equivalent of the MM impossibility which was first shown for two categories by \citet{kasher1997question}.\footnote{The qualification “almost” is needed because in \citet{kasher1997question} $N$ and $X$ are taken to be the same set, imposing the restriction $n=m$, which we do not have. Moreover, although not explicitly stated, they must be assuming $n \geq 3$ because their Theorem 2 does not hold for $n=m=2$, as noted by our Proposition 1.}  Their proof refers to \citet{rubinstein1986algebraic} whose Theorem 3 shows for a general setting the dictatoriality of independent and unanimous aggregators under an assumption that resembles surjectivity. Nevertheless, this result does not seem to cover the case of more than two categories.

Another connection of interest, noted by \citet{maniquet2014judgment}, is the possibility to embed the classification aggregation problem into the judgement aggregation model. An example is the setting studied by \citet{dokow2010aggregation} who consider judgement aggregation with non-binary values. \citet{maniquet2014judgment} use a theorem from \citet{dokow2010aggregation} to prove a different version of the MM impossibility.\footnote{The difference comes from the restriction on the sizes of $P$ and $X$.} Another example is \citet{dietrich2015aggregation} who studies judgement aggregation on binary evaluations while adopting a new version of the independence property. This setting also embeds classification aggregation and an equivalent version of the MM impossibility is proven. Finally, as a further generalization, \citet{endriss2018graph} explore a model of graph aggregation where they obtain an impossibility that entails the MM impossibility. Whether our impossibility without unanimity prevails in these more general frameworks remains an open question.

We close by mentioning two possible directions to escape the MM impossibility. One
is to consider domain restrictions. The only research we know in this direction is \citet{craven2023domain} who shows the existence of a unanimous and independent CAF, \emph{i.e.}, a version of the majority rule, that is defined over a restricted domain. The other direction is to do away with the independence condition in search of interesting CAFs. This has been done in the context of group identification by \citet{kasher1997question}, who define the strong liberal rule, whose variations are introduced by \citet{sung2005axiomatic}.\footnote{\citet{fioravanti2021alternative} added the inclusive and the unanimous aggregator to the collection.} Within the classification aggregation framework, the aggregator $\alpha_\mathit{PLUR}$ exemplifies a CAF that is unanimous and not independent.

\bibliography{class_agg.bib}

\newpage
\appendix

\end{document}